\newcommand{\CC}{\mathcal{C}}
\newcommand{\LL}{\mathcal{L}}
\newcommand{\bmx}[0]{\begin{bmatrix}}
\newcommand{\emx}[0]{\end{bmatrix}}
\newcommand{\vect}[1]{\mathbf{#1}}
\newcommand{\matr}[1]{\mathbf{#1}}
\newcommand{\vo}[0]{\vect{o}}
\newcommand{\vb}[0]{\vect{b}}
\newcommand{\vc}[0]{\vect{c}}
\newcommand{\vh}[0]{\vect{h}}
\newcommand{\vx}[0]{\vect{x}}
\newcommand{\vy}[0]{\vect{y}}
\newcommand{\mW}[0]{\matr{W}}
\newcommand{\mV}[0]{\matr{V}}
\newcommand{\sigmoid}{\sigma}
\newcommand{\EE}[0]{\mathbb{E}}
\newtheorem{theorem}{Theorem}[]
\title{Techniques for Learning Binary \\ Stochastic Feedforward Neural Networks}
\author{
Tapani Raiko \& Mathias Berglund \\
Department of Information and Computer Science\\
Aalto University\\
Espoo, Finland \\
\texttt{\{tapani.raiko,mathias.berglund\}@aalto.fi} \\
\And
Guillaume Alain \& Laurent Dinh \\
Department of Computer Science and Operations Research \\
Universit\'{e} de Montr\'{e}al \\
Montr\'{e}al, Canada \\
\texttt{guillaume.alain.umontreal@gmail.com, dinhlaur@iro.umontreal.ca} \\
}
\begin{document}

\maketitle

\begin{abstract}
Stochastic binary hidden units in a multi-layer perceptron (MLP) network give at least three potential benefits when compared to deterministic MLP networks. (1) They allow to learn one-to-many type of mappings. (2) They can be used in structured prediction problems, where modeling the internal structure of the output is important. (3) Stochasticity has been shown to be an excellent regularizer, which makes generalization performance potentially better in general. However, training stochastic networks is considerably more difficult.
We study training using $M$ samples of hidden activations per input. We show that the case $M=1$ leads to a fundamentally different behavior where the network tries to avoid stochasticity. We propose two new estimators for the training gradient and propose benchmark tests for comparing training algorithms. Our experiments confirm that training stochastic networks is difficult and show that the proposed two estimators perform favorably among all the five known estimators.
\end{abstract}

\section{Introduction}

Feedforward neural networks, or multi-layer perceptron (MLP) networks, model mappings from inputs $\vx$ to outputs $\vy$ through hidden units $\vh$. 
Typically the network output defines a simple (unimodal) distribution such as an isotropic Gaussian or a fully factorial Bernoulli distribution. 
In case the hidden units are deterministic (using a function $\vx\rightarrow \vh$ as opposed to a distribution $P(\vh|\vx)$), the conditionals $P(\vy|\vx)$ belong to the same family of simple distributions.

Stochastic feedforward neural networks (SFNN) \citep{neal1990learning,neal1992connectionist}
have the advantage when the conditionals $P(\vy|\vx)$ are more complicated. 
While each configuration of hidden units $\vh$ produces a simple output, the mixture over them can approximate any distribution, including multimodal distributions required for one-to-many type of mappings.
In the extreme case of using empty vectors as the input $\vx$, they can be used for unsupervised learning of the outputs $\vy$.

Another potential advantage of stochastic networks is in generalization performance.
Adding noise or stochasticity to the inputs of a deterministic neural network has been found useful as a regularization method \citep{sietsma1991creating}.
Introducing multiplicative binary noise to the hidden units \citep[dropout, ][]{hinton2012improving} regularizes even better.

Binary units have additional advantages in certain settings. For instance conditional computations require hard decisions \citep{bengio2013estimating}. In addition, some harwdare solutions are restricted to binary outputs  \citep[e.g. the IBM SyNAPSE, ][] {esser2013cognitive}.

The early work on SFNNs approached the inference of $\vh$ using Gibbs sampling \citep{neal1990learning,neal1992connectionist} or mean field
\citep{saul1996mean}, which both have their downsides. 
Gibbs sampling can mix poorly and the mean-field approximation can both be inefficient and optimize a lower bound on the likelihood that may be too loose.
More recent work proposes simply drawing samples from $P(\vh|\vx)$ during the feedforward phase \citep{hinton2012improving,bengio2013estimating,tang2013learning}.
This guarantees independent samples and an unbiased estimate of $P(\vy|\vx)$.

We can use standard back-propagation when using stochastic continuous-valued units (e.g.\ with additive noise or dropout),
but back-propagation is no longer possible with discrete units.
There are several ways of estimating the gradient in that case.
\citet{bengio2013estimating} proposes two such estimators: an unbiased estimator with a large variance, and a biased version that approximates back-propagation.

\citet{tang2013learning} propose an unbiased estimator of a lower bound that works reasonably well in a hybrid network containing both deterministic and stochastic units. 
Their approach relies on using more than one sample from $P(\vy|\vx)$ for each training example,
and in this paper we provide theory to show that using more than one sample is an important requirement.
They also demonstrate interesting applications such as mapping the face of a person into varying expressions,
or mapping a silhouette of an object into a color image of the object.

\citet{tang2013learning} argue for the choice of a hybrid network structure based on the finite (and thus limited) number of hidden configurations in a fully discrete $\vh$. 
However, we offer an alternate hypothesis: It is much easier to learn a deterministic network around a small number of stochastic units,
so that it might not even be important to train the stochastic units properly. In an extreme case, the stochastic units are not trained at all, and the deterministic units do all the work.

In this work, we take a step back and study more rigorously the training problem with fully stochastic networks.
We compare different methods for estimating the gradient and propose two new estimators. One is an approximate back-propagation with less bias than the one by \citet{bengio2013estimating},
and the other is a modification of the estimator by \citet{tang2013learning} with less variance.
We propose a benchmark test setting based on the well-known MNIST data and the Toronto Face Database.

\section{Stochastic feedforward neural networks}

We study a model that maps inputs $\vx$ to outputs $\vy$ through stochastic binary hidden units $\vh$. The equations are given for just one hidden layer, but the extension to multiple layers is easy\footnote{Apply $P(\vh|\vx)$ separately for each layer such that $\vx$ denotes the layer below instead of the original input.}. The activation probability is computed just like the activation function in deterministic multilayer perceptron (MLP) networks:
\begin{align}
 P(h_i=1 \mid \vx) = \sigmoid(a_i) = \sigmoid(\mW_{i:} \vx + b_i),
 \label{eq:h_given_x}
\end{align}
where $\mW_{i:}$ denotes the $i$th row vector of matrix $\mW$ and $\sigmoid(\cdot)$ is the sigmoid function.
For classification problems, we use softmax for the output probability
\begin{align}
 P(y=i \mid \vh) = \frac{\exp(\mV_{i:} \vh + c_i)}{\sum_j \exp(\mV_{j:} \vh + c_j)}.
\end{align}
For predicting binary vectors $\vy$, we use a product of Bernoulli distributions
\begin{align}
 P(y_i=1 \mid \vh) = \sigmoid(\mV_{i:} \vh + c_i).
\end{align}

The probabilistic training criterion for deterministic MLP networks is $\log P(\vy| \vx)$. Its gradient with respect to model parameters ${\boldsymbol{\theta}}=\{\mW,\mV,\vb,\vc\}$ can be computed using the back-propagation algorithm, which is based on the chain rule of derivatives. Stochasticity brings difficulties in both estimating the training criterion and in estimating the gradient.
The training criterion of the stochastic network
\begin{align}
  \label{eq:criterion}
  \CC &= \log P(\vy\mid \vx) = \log \sum_\vh P(\vy,\vh \mid \vx) 
    = \log \sum_\vh P(\vy\mid \vh) P(\vh \mid \vx) \\
    &= \log \EE_{P(\vh \mid \vx)} P(\vy\mid \vh)
\end{align}  
requires summation over an exponential number of configurations of $\vh$. Also,
derivatives with respect to discrete variables cannot be directly defined. We will review and propose solutions to both problems below.

\subsection{Proposed estimator of the training criterion}


We propose to estimate the training criterion in Equation \eqref{eq:criterion} by
\begin{align}
  \label{eq:criterion_estimate}
  \hat{\CC}_M &= \log \frac1M \sum_{m=1}^M P(\vy\mid \vh^{(m)}) \\
  \vh^{(m)} &\sim P(\vh \mid \vx).
\end{align}
This can be interpreted as the performance of a finite mixture model over $M$ samples drawn from $P(\vh|\vx)$.

One could hope that using just $M=1$ sample just like in many other stochastic networks \citep[e.g.][]{hinton2012improving} would work well enough. However, here we show in that case the network always prefers to minimize the stochasticity, for instance by increasing the input weights to a stochastic sigmoid unit such that it behaves as a deterministic step-function nonlinearity.

\begin{theorem}
When maximizing the expectation of $\hat{\CC}_1$ in Equation (\ref{eq:criterion_estimate}) using $M=1$, a hidden unit $h_i$ never prefers a stochastic output over a deterministic one. However, when maximizing the expectation of $\CC$ in Equation (\ref{eq:criterion}), the hidden unit $h_i$ may prefer a stochastic output over any of the deterministic ones.
\end{theorem}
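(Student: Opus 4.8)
The plan is to isolate the dependence of each criterion on a single unit's activation probability $p = P(h_i = 1 \mid \vx) = \sigmoid(a_i)$, holding the parameters of all other units and of the output layer fixed. Because the hidden units are conditionally independent given $\vx$, I would factor $P(\vh \mid \vx) = P(h_i \mid \vx)\, P(\vh_{-i} \mid \vx)$, where $\vh_{-i}$ collects the remaining units. Writing $A(\vh_{-i}) = P(\vy \mid h_i=1, \vh_{-i})$ and $B(\vh_{-i}) = P(\vy \mid h_i=0, \vh_{-i})$, both objectives reduce to functions of the scalar $p\in[0,1]$, and the theorem becomes a statement about where those functions attain their maxima: at an endpoint (deterministic) or in the interior (stochastic).

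For the first claim I would take the expectation of $\hat{\CC}_1$ over the single hidden sample, obtaining
\begin{align}
 \EE[\hat{\CC}_1] = \sum_{\vh_{-i}} P(\vh_{-i} \mid \vx)\, \bigl[\, p \log A(\vh_{-i}) + (1-p)\log B(\vh_{-i}) \,\bigr],
\end{align}
which is manifestly affine in $p$, since the weights $P(\vh_{-i}\mid\vx)$ do not depend on $p$. An affine function on $[0,1]$ always attains its maximum at $p=0$ or $p=1$, so a deterministic setting is always at least as good as any stochastic one. Moreover, averaging this quantity over a training set of $(\vx,\vy)$ pairs is a nonnegative combination of affine functions and hence stays affine in $p$, so the conclusion survives even when maximizing the expected criterion over data.

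For the second claim, the same factorization gives $\CC = \log Q_{\vy}(p)$ with $Q_{\vy}(p) = \sum_{\vh_{-i}} P(\vh_{-i}\mid\vx)\,[\,pA(\vh_{-i}) + (1-p)B(\vh_{-i})\,]$, again affine in $p$. For a single target $\vy$ this is the logarithm of an affine function, hence monotone and still endpoint-optimal --- so the benefit of stochasticity cannot arise from one example. The crucial observation is that the expectation over data is a sum $\sum_{\vy} w_{\vy}\log Q_{\vy}(p)$ of concave functions, which is concave and may have an interior maximizer. I would exhibit this with a minimal one-to-many construction: a single unit feeding two equally frequent targets, arranged so that $h_i=1$ explains one target and $h_i=0$ the other (so $Q_{\vy}$ is increasing in $p$ for one target and decreasing for the other). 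By symmetry the objective is then maximized at $p=\tfrac12$, and an arithmetic-versus-geometric-mean comparison shows that this interior value strictly exceeds the common endpoint value, so a stochastic output is strictly preferred over every deterministic one.

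The main obstacle is conceptual rather than computational: one must notice that the dichotomy is \emph{not} simply ``expectation of log versus log of expectation'' at the level of a single example --- for one target both criteria are endpoint-optimal (affine versus log-affine). The genuine split only emerges after averaging over a multimodal target distribution, where the $M=1$ criterion remains affine while the exact criterion becomes a sum of concave logarithms. Getting the quantifiers right is the delicate part (\textbf{never} requires linearity to persist after the data average, whereas \textbf{may} requires only a single witnessing construction), and the arithmetic--geometric mean step is what certifies that the constructed interior optimum is strict.
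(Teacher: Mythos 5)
Your proposal is correct and takes essentially the same route as the paper: the first claim via linearity (affineness) of the data-averaged $M=1$ criterion in $P(h_i \mid \vx)$, so that no mixed strategy can beat the best deterministic choice, and the second claim via a symmetric two-target witness in which the optimal unit probability is interior. The only cosmetic difference is that you certify the interior optimum by concavity, symmetry, and the AM--GM inequality, whereas the paper identifies it through the negative-KL formulation of the expected criterion and then compares values numerically; its $0.9/0.1$ example is exactly your construction.
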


\begin{proof}
The expected $\hat{\CC}_1$ over the data distribution can be upper-bounded as
\begin{align}
  \EE_{P_d(\vx,\vy)} \EE_{P(\vh\mid\vx)} \left[ \hat{\CC}_1 \right]
    &=  \EE_{P_d(\vx,\vy)} \EE_{P(\vh\mid\vx)} \log P(\vy\mid\vh) \\
    &= \EE_{P_d(\vx)} \EE_{P(h_i\mid \vx)} \left[ \EE_{P_d(\vy\mid \vx)} \EE_{P(\vh_{\backslash i}\mid \vx,h_i)} \log P(\vy \mid \vh)\right] \\
    &= \EE_{P_d(\vx)} \EE_{P(h_i\mid \vx)} f(h_i,\vx) \\
    &\leq  \EE_{P_d(\vx)} \max_{h_i} f(h_i,\vx),
\end{align}
where $P_d$ denotes the data distribution.
The value in the last inequality is achievable by selecting the distribution of $P(h_i | \vx)$
to be a Dirac delta around the value $h_i$ which maximizes the deterministic function $f(h_i, x)$.
This can be done for every $x$ under the expectation in an independent way.
This is analogous to a idea from game theory:
since the performance achieved with $P(h_i| \vx)$ is a linear combination of
the performances $f(h_i,\vx)$ of the deterministic choices $h_i$,
any mixed strategy $P(h_i| \vx)$ cannot be better than the best deterministic choice.

Let us now look at the situation for the expectation $\CC$
and see how it differs from the case of $\CC_1$ that we had with one particle.
We can see that the original training criterion
can be written as the expectation of a KL-divergence.
\begin{align}
\EE_{P_d(\vx,\vy)} \left[ \CC \right] & = \EE_{P_d(\vx)} \EE_{P_d(\vy \mid \vx)} \log P(\vy\mid\vx) \label{eqn:original_training_criterion} \\
                                  & = \EE_{P_d(\vx)} \left[ -\textrm{KL} \left( P_d(\vy \mid \vx) \| P(\vy\mid\vx) \right)  + \textrm{const} \right]
\end{align}
The fact that this expression features a negative KL-divergence means that the maximum is achieved when the conditionals match exactly.
That is, it it maximized when we have that $P(\vy | \vx) = P_d(\vy | \vx)$ for each value of $\vx$.

We give a simple example in which $(x, h, y)$ each take
values in $\{0, 1\}$. We define the following conditions on
$P_d(y|x)$ and $P(y|h)$, and we show how any deterministic
$P(h|x)$ is doing a bad job at maximizing (\ref{eqn:original_training_criterion}).
\begin{equation*}
    P_d(y\mid x) = \left[\begin{array}{cc}
        0.5 & 0.5\\
        0.5 & 0.5
    \end{array}\right], \hspace{1em}
    P(y\mid h) = \left[\begin{array}{cc}
        0.9 & 0.1\\
        0.1 & 0.9
    \end{array}\right]
\end{equation*}
A deterministic $P(h\mid x) = \left[\begin{array}{cc}
        a & b\\
        1-a & 1-b
    \end{array}\right]$ is one in which $a,b$
take values in $\{0,1\}$.

Criterion (\ref{eqn:original_training_criterion}) is maximized by $(a,b)=(0.5,0.5)$,
regardless of the distribution $P_d(x)$. For the purposes
of comparing solutions, we can simply take $P_d(x)=\left[ 0.5 \hspace{0.4em} 0.5 \right]$.
In that case, we get that the expected $\CC$
takes the value $\hspace{0.3em} 0.5 \log(0.5) + 0.5 \log(0.5) \approx -0.30$.
On the other hand, 
all the deterministic solutions yield a lower value
$\hspace{0.3em} 0.5 \log(0.9) + 0.5 \log(0.1) \approx -0.52$.

\end{proof}

\subsection{Gradient for training $P(\vy|\vh)$}
\label{sec:P_y_given_h}

We will be exploring five different estimators for the gradient of a training criterion wrt.\ parameters ${\boldsymbol{\theta}}$. 
However, all of them will share the following gradient for training $P(\vy|\vh)$.

Let $\vo = \mV \vh + \vc$ be the incoming signal to the activation function $\phi(\cdot)$ in the final output layer. For training $P(\vy|\vh)$, we compute the gradient of the training criterion $\hat{\CC}_M$ in Equation \eqref{eq:criterion_estimate}
\begin{align}
  \hat{\CC}_M &= \log \frac1M \sum_{m=1}^M P(\vy\mid \vh^{(m)}) 
      = \log \frac1M \sum_{m=1}^M \phi(\vo^{(m)}) \\
  G(\vo^{(m)}) & := \frac{\partial \hat{\CC}_M}{\partial \vo^{(m)}} = \frac{\phi^\prime(\vo^{(m)})} {\sum_{m^\prime=1}^M \phi(\vo^{(m^\prime)})}
 = \frac{\phi(\vo^{(m)})}{\sum_{m^\prime=1}^M \phi(\vo^{(m^\prime)})} \frac{\partial \log \phi(\vo^{(m)})}{\partial \vo^{(m)}} \\
  &= \frac{P(\vy\mid \vh^{(m)})}{\sum_{m\prime=1}^M P(\vy\mid \vh^{(m^\prime)})} \frac{\partial \log \phi(\vo^{(m)})}{\partial \vo^{(m)}}
  = \frac{w^{(m)}}{\sum_{m^\prime=1}^M w^{(m^\prime)}
}\frac{\partial \log \phi(\vo^{(m)})}{\partial \vo^{(m)}},
\end{align}
where $w^{(m)}=P(\vy|\vh^{(m)})$ are unnormalized weights. 
In other words, we get the gradient in the mixture by computing the gradient of the individual contribution $m$ and multiplying it with normalized weights $\bar{w}^{(m)}= w^{(m)}/\sum_{m^\prime=1}^M w^{(m^\prime)}$. The normalized weights $\bar{w}^{(m)}$ can be interpreted as responsibilities in a mixture model \citep[see e.g.][Section 2.3.9]{bishop2006pattern}.

The gradients $G(\mV)$, $G(\vh)$, and $G(\vc)$ are computed from $G(\vo)$ using the chain rule of derivatives just like in standard back-propagation.

\subsection{First estimators of the gradient for training $P(\vh|\vx)$}

\citet{bengio2013estimating} proposed two estimators of the gradient for $P(\vh|\vx)$. The first one is unbiased but has high variance. It is defined as
\begin{align}
  G_1(a_i) &:= (h_i-\sigma(a_i))(L-\bar{L}_i) \\
  \bar{L}_i &= \frac{\EE \left[ (h_i-\sigma(a_i))^2 L\right]}{\EE \left[ (h_i-\sigma(a_i))^2 \right]},
\end{align}
where we plug in $L=\hat{\CC}_M$ as the training criterion. We estimate the numerator and the denominator of $\bar{L}_i$ with an exponential moving average.

The second estimator is biased but has lower variance. It is based on back-propagation where we set $\frac{\partial h_i}{\partial a_i} := 1$ resulting in
\begin{align}
 G_2(a_i) := G(h_i) = (\mV_{:i})^T G(\vo).
\end{align}

\subsection{Proposed biased estimator of the gradient for training $P(\vh|\vx)$}
\label{sec:proposed_biased}

We propose a new way of propagating the gradient of the training criterion through discrete hidden units $h_i$. Let us consider $h_i$ continuous random variables with additive noise $\epsilon_i$
\begin{align}
  h_i & = \sigmoid(a_i) + \epsilon_i \\
  \label{eq:epsilon}
  \epsilon_{i} & \sim\begin{cases}
                      1-\sigma(a_{i}) & \hspace{0.5em}\textrm{with probability}\hspace{1em}\sigma(a_{i})\\
                       -\sigma(a_{i}) & \hspace{0.5em}\textrm{with probability}\hspace{1em}1-\sigma(a_{i})
                      \end{cases}
\end{align}
Note that $h_i$ has the same distribution as in Equation (\ref{eq:h_given_x}), that is, it only gets values 0 and 1.
With this formulation, we propose to back-propagate derivatives through $h_i$ by
\begin{align}
 G_3(a_i) := \sigmoid^\prime(a_i) G(h_i) = \sigmoid^\prime(a_i) (\mV_{:i})^T G(\vo)
\end{align}
This gives us a biased estimate of the gradient since we ignore
the fact that the structure of the noise $\epsilon_i$ depends on the input signal $a_i$. 
One should note, however, that the noise is zero-mean with any input $a_i$,
which should help keep the bias relatively small.

\subsection{Variational training}

\citet{tang2013learning} use a variational lower bound $\LL(Q)$ on the training criterion $\CC$ as
\begin{align}
  \CC &= \log P(\vy\mid \vx)
  = \sum_\vh P(\vh\mid \vy,\vx) \log \frac{P(\vy,\vh\mid \vx)}{P(\vh\mid\vy,\vx)}
  \geq \sum_\vh Q(\vh) \log \frac{P(\vy,\vh\mid \vx)}{Q(\vh)} =: \LL(Q).
\end{align}
The above inequality holds for any distribution $Q(\vh)$,
but we get more usefulness out of it by choosing $Q(\vh)$ so that
it serves as a good approximation of $P(\vh\mid\vy,\vx)$.

We start by noting that we can use importance sampling to
express $P(\vh\mid\vy,\vx)$ in terms of a proposal distribution
${R(\vh |\vy,\vx)}$ from which we can draw samples.
\begin{align}
 P(\vh\mid\vy,\vx) \propto P(\vy\mid \vh) P(\vh \mid \vx)
  = \frac{P(\vy\mid \vh) P(\vh \mid \vx)}{R(\vh\mid \vy,\vx)} R(\vh\mid \vy,\vx)
\end{align}
Let $\delta(\vh)$ be the Dirac delta function centered at $\vh$. We construct $Q(\vh)$ based on this expansion:
\begin{align}
  Q(\vh) &= \sum_{m=1}^M \bar{w}^{(m)} \delta(\vh^{(m)}) \\
  \vh^{(m)} &\sim R(\vh\mid \vy,\vx) \\
  w^{(m)} &= \frac{P(\vy\mid \vh^{(m)}) P(\vh^{(m)} \mid \vx)}{R(\vh^{(m)}\mid \vy,\vx)} \\
  \bar{w}^{(m)} &= \frac{w^{(m)}}{\sum_{m^\prime=1}^M w^{(m^\prime)}}
\end{align}
where $w^{(m)}$ and $\bar{w}^{(m)}$ are called the unnormalized and normalized important weights.

It would be an interesting line of research to train an auxiliary model for the proposal distribution $R(\vh |\vx,\vy)$ following ideas from \cite{kingma2013auto,rezende2014stochastic,mnih2014neural} that call the equivalent of $R$ the recognition model or the inference network. However, we do not pursue that line further in this paper and follow
\citet{tang2013learning} who chose $R(\vh |\vx,\vy) := P(\vh |\vx)$, in which case the importance weights simplify to $w^{(m)} = P(\vy|\vh^{(m)})$.

\citet{tang2013learning} use a generalized EM algorithm, where they compute the gradient for the lower bound $\LL(Q)$ given that $Q(\vh)$ is fixed
\begin{align}
  G_4({\boldsymbol{\theta}}) &:= \frac{\partial}{\partial {\boldsymbol{\theta}}} \sum_\vh Q(\vh) \log \frac{P(\vy,\vh\mid \vx)}{Q(\vh)} \\
   &= \frac{\partial}{\partial {\boldsymbol{\theta}}} \sum_{m=1}^M \bar{w}^{(m)} \left[\log P(\vy\mid \vh^{(m)}) + \log P(\vh^{(m)}\mid \vx)\right].
\end{align}
Thus, we train $P(\vh^{(m)}|\vx)$ using $\vh^{(m)}$ as target outputs.

It turns out that the resulting gradient for $P(\vy|\vh)$ is exactly the same as in Section \ref{sec:P_y_given_h}, despite the rather different way of obtaining it.
The importance weights $\bar{w}^{(m)}$ have the same role as responsibilities $\bar{w}^{(m)}$ in the mixture model, so we can use the same notation for them.

{\bf Proposed Unbiased Estimator of the Gradient}
We propose a new gradient estimator by applying a variance reduction technique  \citep{weaver2001optimal,mnih2014neural} to the estimator by \citet{tang2013learning}.
First we note that 
\begin{align}
\EE_{P(\vh\mid \vx)} \left[\frac{\partial}{\partial {\boldsymbol{\theta}}} \log P(\vh\mid \vx) \right] 
 = \int P(\vh\mid \vx) \frac{\frac{\partial}{\partial {\boldsymbol{\theta}}} P(\vh\mid \vx)}{P(\vh\mid \vx)} \mathrm{d}\vh
 = \frac{\partial}{\partial {\boldsymbol{\theta}}} \int P(\vh\mid \vx) \mathrm{d}\vh = \frac{\partial}{\partial \theta} 1 = 0.
\end{align}
That is, when training $P(\vh| \vx)$ with samples $\vh^{(m)}\sim P(\vh| \vx)$ drawn from the model distribution, the gradient is on average zero. 
Therefore we can change the estimator of $P(\vh| \vx)$ by subtracting any constant $c$ from the weights $\bar{w}^{(m)}$ without introducing any bias.
We choose $c=\EE \left[\bar{w}^{(m)}\right]=\frac{1}{M}$
which is empirically shown to be sufficiently close to the optimum (see Figure \ref{fig:choosing_c_error_m} (left)).
Finally, the proposed estimator becomes
\begin{align}
  G_5({\boldsymbol{\theta}}) &:= \frac{\partial}{\partial {\boldsymbol{\theta}}} \sum_{m=1}^M \left[ \bar{w}^{(m)} \log P(\vy\mid \vh^{(m)})
  + \left(\bar{w}^{(m)} - \frac{1}{M}\right) \log P(\vh^{(m)}\mid \vx)\right].
  \label{eq:G5}
\end{align}

\begin{figure}[t]
\begin{center}
\includegraphics[width=0.49\textwidth, trim=150 330 150 335, clip]{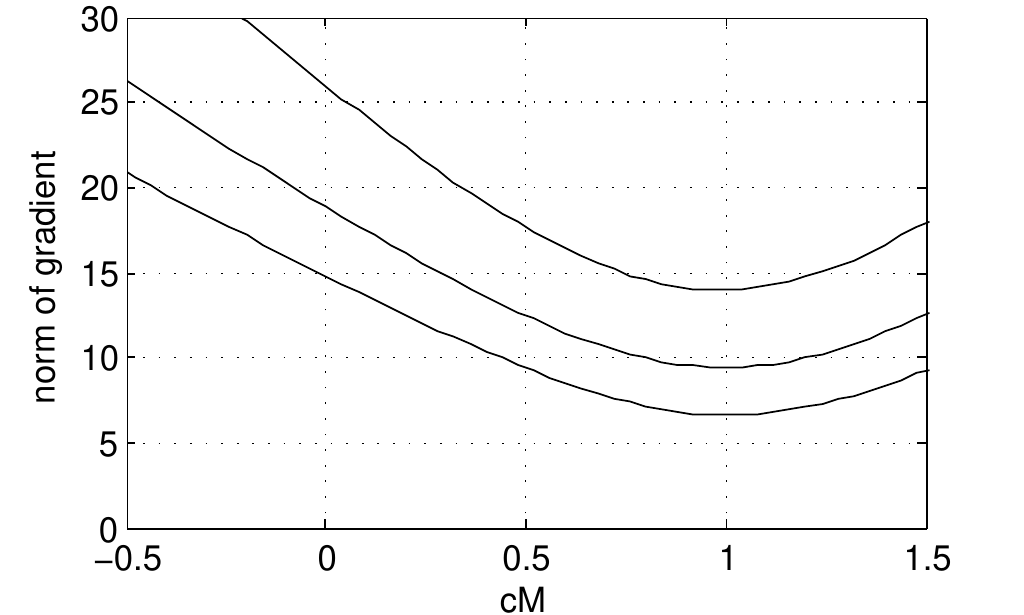} 
\includegraphics[width=0.49\columnwidth]{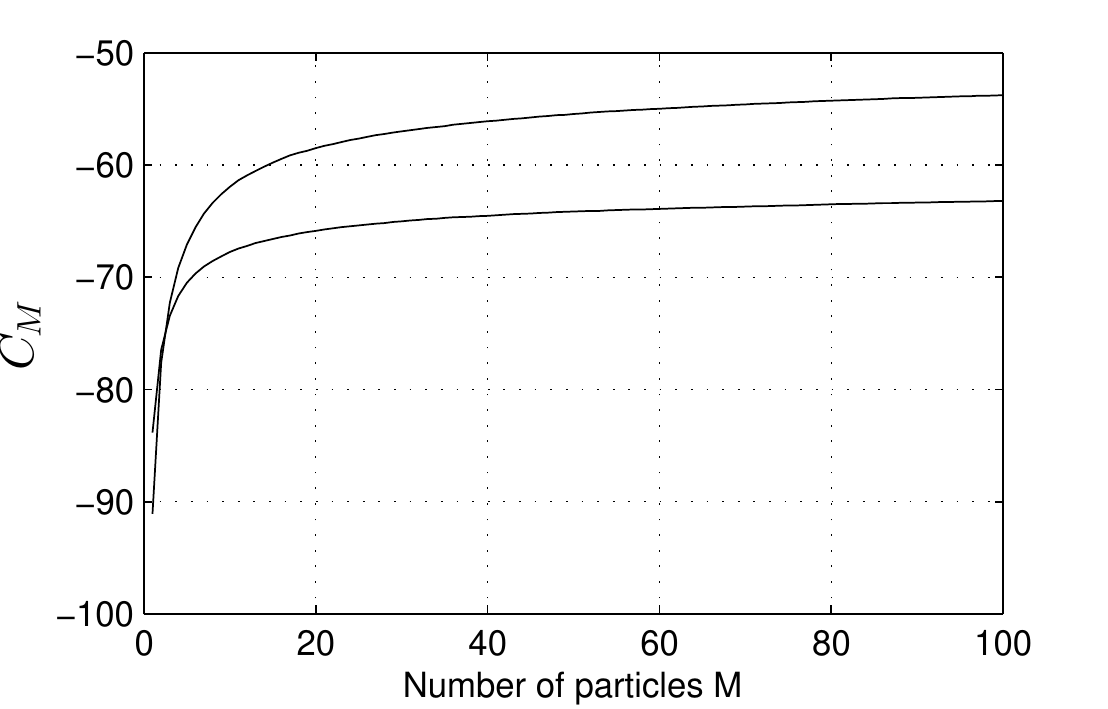} 

\caption{\textit{Left}: The norm of the gradient for the weights of the first hidden layer as a function of $cM$ where the proposed $c=\frac{1}{M}$ in Equation \eqref{eq:G5} corresponds to $cM=1$. The norm is averaged over a mini-batch, after \{1,7,50\} epochs of training (curves from top to bottom) with $G_5$ in the MNIST classification experiment (see Appendix~\ref{sec:classification}). Varying $c$ only changes the variance of the estimator, so the minimum norm corresponds to the minimum variance. \textit{Right}: $\hat{\CC}_{M}$ as a function of the number of particles used during test time for the MNIST structured prediction task for the two proposed models trained with $M=20$.}
\label{fig:choosing_c_error_m}
\end{center}
\end{figure}

\section{Experiments}

We propose two experiments as benchmarks for stochastic feedforward networks based on the MNIST handwritten digit dataset \citep{lecun1998gradient} and the Toronto Face Database \citep{tfd_report}. In both experiments, the output distribution is likely to be complex and multimodal.

In the first experiment, we predicted the lower half of the MNIST digits using the upper half as inputs. The MNIST dataset used in the experiments was binarized as a preprocessing step by sampling each pixel independently using the grey-scale value as its expectation. In the second experiment, we followed \citet{tang2013learning} and predicted different facial expressions in the Toronto Face Database \citep{tfd_report}. As data, we used all individuals with at least 10 different facial expression pictures, which we do not binarize. We set the input to the mean of these images per subject, and as output predicted the distribution of the different expressions of the same subject\footnote{We hence discarded the expression labels}. We randomly chose 100 subjects for the training data (1372 images) and the remaining 31 subjects for the test data (427 images). As the data in the second problem is continuous, we assumed unit variance Gaussian noise and thus trained the network using the sum of squares error. We used a network structure of 392-200-200-392 and 2304-200-200-2304 in the first and second problem, respectively.

Before running the experiments, we did a simple viability check of the gradient estimators by training a network to do MNIST classification. Based on the results, we kept $G_3$ to $G_5$ that performed significantly better than $G_1$ and $G_2$. The results of the viability experiment can be found in Appendix~\ref{sec:classification}.

For comparison, we also trained four additional networks (labeled A-D) in addition to the stochastic feedforward networks. Network~A is a deterministic network (corresponding to $G_3$ with $\epsilon_i=0$ in Equation \eqref{eq:epsilon}). In network~B, we used the weights trained to produce deterministic values for the hidden units, but instead of using these deterministic values at test time we use their stochastic equivalent. We therefore trained the network in the same way as network~A, but ran the tests as the network would be a stochastic network. Network~C is a hybrid network inspired by \citet{tang2013learning}, where each hidden layer consists of 40 binary stochastic neurons and 160 deterministic neurons. However, the stochastic neurons have incoming connections from the deterministic input from the previous layer, and outgoing connections to the deterministic neurons in the same layer. As in the original paper, the network was trained using the gradient estimator $G_4$. Network~D is the same as the hybrid network~C with one difference: the stochastic neurons have a constant activation probability of 0.5, and do hence not have any incoming weights or biases to learn. 

%

In all of the experiments, we used stochastic gradient descent with a mini-batch size of 100 and momentum of 0.9. We used a learning rate schedule where the learning rate increases linearly from zero to maximum during the first five epochs and back to zero during the remaining epochs. The maximum learning rate was chosen among $\{0.0001, 0.0003, 0.001, \dots, 1\}$ and the best test error for each method is reported.\footnote{In the MNIST experiments we used a separate validation set to select the learning rate. However, as we chose just one hyperparameter with a fairly sparse grid, we only report the best test error in the TFD experiments without a separate validation set.} The models were trained with $M\in\{1,20\}$, and during test time we always used $M=100$.


As can be seen in Table~\ref{tab:structured},
excluding the comparison methods, the proposed biased estimator $G_3$ performs the best in both tasks.
It is notable that the performance of $G_3$ increased
significantly when using more than $M=1$ particles,
as could be predicted from Theorem 1.
In Figure~\ref{fig:choosing_c_error_m} (right) we plot the objective $\CC_M$ at
test time based on a number of particles $M=1,\ldots,100$.
In theory, a larger number of particles $M$ is always better
(if given infinite computational resources), but here
Figure~\ref{fig:choosing_c_error_m} (right) shows how the objective $\CC_M$
is estimated very accurately with only $M=20$ or $M=40$.

Of all the networks tested, the best performing network in both tasks was however comparison network~D, i.e. the deterministic network with added binary stochastic neurons that have a constant activation probability of 0.5. It is especially interesting to note that this network also outperformed the hybrid network~C where the output probabilities of the stochastic neurons are learned. Network~D seems to gain from being able to model stochasticity without the need to propagate errors through the binary stochastic variables. The results give some support to the hypothesis that a hybrid network outperforms a stochastic network because it is easier to learn a deterministic network around a small number of stochastic units than learning a full stochastic network, although the stochastic units are not trained properly.

The results could possibly be improved by making the networks larger and continuing training longer if given enough computational capacity. This might be the case especially in the experiments with the Toronto Face Dataset, where the deterministic network~A outperforms some of the stochastic networks. However, the critical difference between the stochastic networks and the deterministic network can be be observed in Figure~\ref{fig:reconstructions}, where the stochastic networks are able to generate reconstructions that correspond to different digits for an ambiguous input. Clearly, the deterministic network cannot model such a distribution.

\begin{figure*}[t]
\begin{center}
\includegraphics[width=0.4218\textwidth]{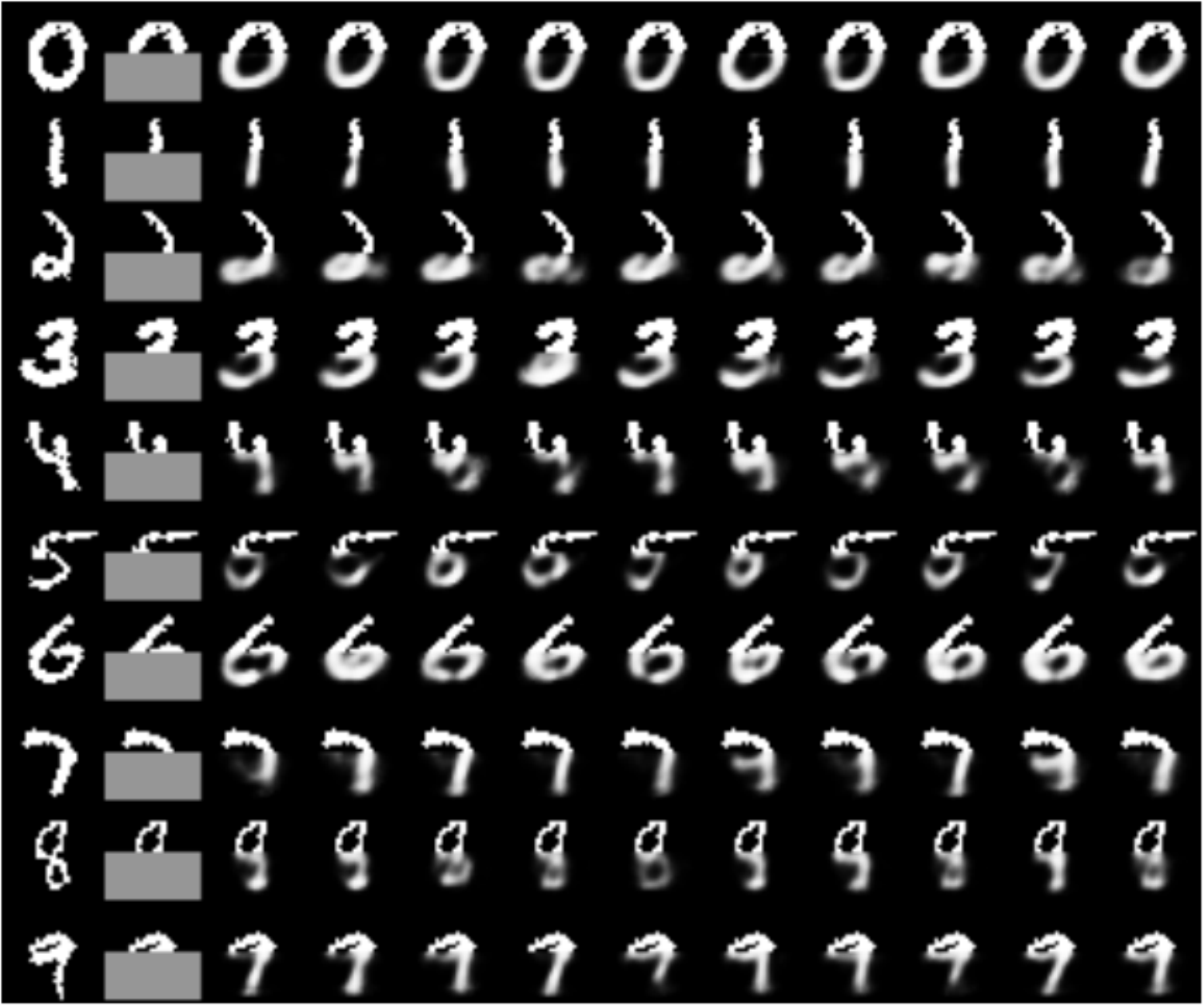} 
\includegraphics[width=0.4218\textwidth]{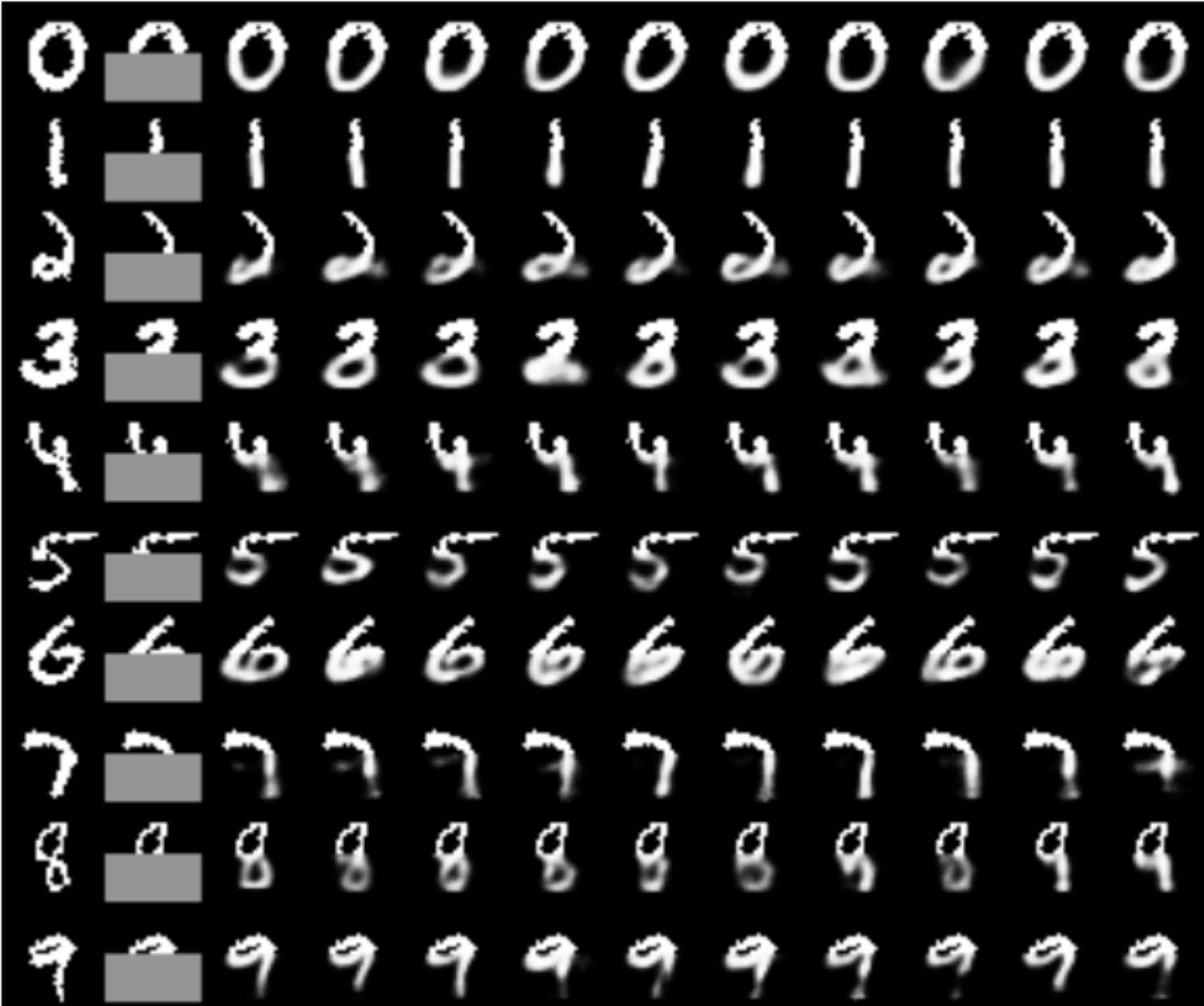}
\includegraphics[width=0.1070\textwidth]{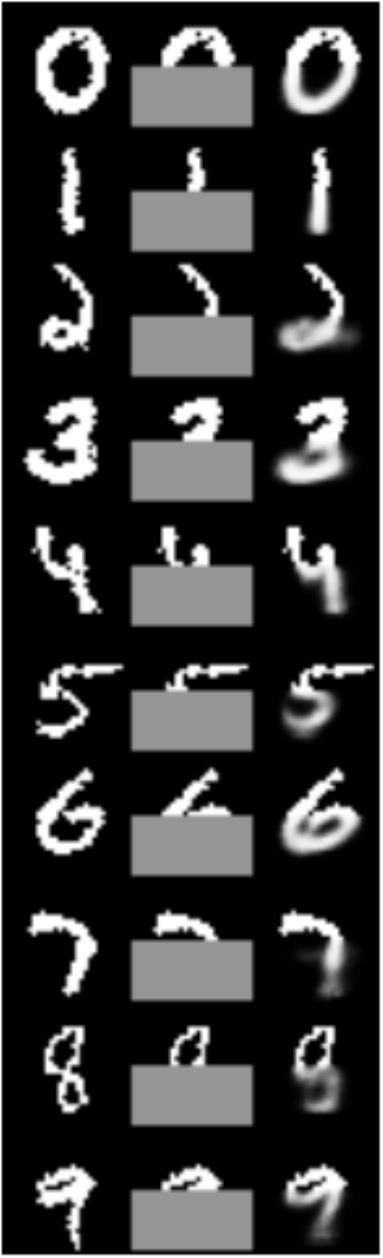}  

\caption{Samples drawn from the prediction of the lower half of the MNIST test data digits based on the upper half with models trained using $G_3$ (left), $G_5$ (middle), and for the deterministic network (right). The leftmost column is the original MNIST digit, followed by the masked out image and ten samples. The figures illustrate how the stochastic networks are able to model different digits in the case of ambiguous inputs.}
\label{fig:reconstructions}
\end{center}
\end{figure*}

\newcommand{\specialcell}[2][c]{\begin{tabular}[#1]{@{}c@{}}#2\end{tabular}}

\begin{table}[t]
\caption{
Results obtained on MNIST and TFD structured prediction using various number of samples $M$ during training and various estimators of the gradient $G_i$. Error margins are $\pm$ two standard deviations from 10 runs.
}
\begin{center}
\begin{tabular}{l | c c | c c }

& \multicolumn{2}{c|}{\specialcell{MNIST Neg. test \\ log-likelihood ($\hat{\CC}_{100}$)}} & \multicolumn{2}{c}{\specialcell{TFD test Sum of \\ Squared Errors}} \\

& $M=1$ & $M=20$ & $M=1$ & $M=20$ \\ \hline

$G_3$ (proposed biased) & $59.8 \pm 0.1$ & $53.8 \pm 0.2$ & $31.7 \pm 0.7$ & $26.3 \pm 3.7$ \\  
$G_4$ (Tang \textit{et al.}, \citeyear{tang2013learning}) & na & $64.0 \pm 1.7$ & na & $51.4 \pm 0.1$ \\
$G_5$ (proposed unbiased) & na & $63.2 \pm 1.2$ & na & $51.3 \pm 0.1$ \\ \hline
deterministic (A) & $68.4 \pm 0.1$ & na & $35.3 \pm 0.4$ & na \\
deterministic as stochastic (B) & $59.1 \pm 0.2$ & na & $48.3 \pm 7.5$ & na \\ 
hybrid (C) & na & $58.4 \pm 0.8$ & na & $35.5 \pm 1.0$ \\
deterministic, binary noise (D) & $67.9 \pm 1.1$ & $\mathbf{52.0 \pm 0.2}$ & $33.4 \pm 0.6$ & $\mathbf{21.4 \pm 0.6}$ \\

\end{tabular}
\label{tab:structured}
\end{center}
\end{table}

\section{Discussion}


In the proposed estimator of the gradient for $P(\vh|\vx)$ in Equation \eqref{eq:G5}, 
there are both positive and negative weights for various particles $\vh^{(m)}$.
Positive weights can be interpreted as pulling probability mass towards the particle, and negative weights as pushing probability mass away from the particle.
Although we showed that the variance of the gradient estimate is smaller when using both positive and negative weights ($G_5$ vs. $G_4$), the difference in the final performance of the two estimators was not substantial

One challenge with structured outputs $\vy$ is to find samples $\vh^{(m)}$ that give a reasonably large probability $P(\vy| \vh^{(m)})$ with a reasonably small sample size $M$. 
Training a separate $R(\vh | \vx,\vy) \neq P(\vh | \vx)$ as a proposal distribution looks like a promising direction for addressing that issue. 
It might still be useful to use a mix of particles from $R$ and $P(\vh | \vx)$, and subtract a constant from the weights of the latter ones. 
This approach would yield both particles that explain $\vy$ well, and particles that have negative weights.


\section{Conclusion}

Using stochastic neurons in a feedforward network is more than just a
computational trick to train deterministic models.
The model itself can be defined in terms of stochastic particles in the hidden layers,
and we have shown many valid alternatives 
to the usual gradient formulation.

These proposals for the gradient involve particles in the hidden layers with normalized weights that represent how
well the particles explain the output targets. We showed both theoretically and experimentally how involving more than one particle significantly enhances the modeling capacity.

We demonstrated the validity of these techniques in three sets of experiments: we trained a classifier on MNIST that achieved a reasonable performance,
a network that could fill in the missing information
when we deleted the bottom part of the MNIST digits, and a network that could output individual expressions of face images based on the mean expression.

We hope that we have provided some insight into the properties of
stochastic feedforward neural networks, and that the theory
can be applied to other contexts such as the study of Dropout
or other important techniques that give a stochastic flavor
to deterministic models.

\subsubsection*{Acknowledgements}

The authors would like to acknowledge NSERC, Nokia Labs and the Academy of Finland as sources of funding, in addition to the developers of Theano \citep{Bastien-Theano-2012, bergstra+al:2010-scipy}

\bibliography{read}
\bibliographystyle{iclr2015}

\newpage
\appendix

\section{Classification experiment}
\label{sec:classification}

MNIST classification is a well studied problem where performances of a huge variety of approaches are known.
Since the output $\vy$ is just a class label, the advantage of being able to model complex output distributions is not applicable. Still, the benchmark is useful for comparing training algorithms against each other, and was used in this paper to test the viability of the gradient estimators.

We used a network structure with dimensionalities 784-200-200-10. The input data was first scaled to the range of $[0,1]$, and the mean of each pixel was then subtracted. 
As a regularization method, Gaussian noise with standard deviation 0.4 was added to each pixel separately in each epoch \citep{raiko2012deep}. The models were trained for 50 epochs.

Table~\ref{tab:mnist_classification} gives the test set error rate for each method. As can be seen from the table, deterministic networks give the best results. Excluding the comparison networks, the best result is obtained with the proposed biased gradient $G_3$ followed by the proposed unbiased gradient $G_5$. Based on the results, gradient estimators $G_1$ and $G_2$ were left out from the structured prediction experiments.

\begin{table}[h]
\caption{
Results obtained on MNIST classification using various number of samples $M$ during training and various estimators of the gradient $G_i$.
}
\begin{center}
\begin{tabular}{|l | c | c |}
\hline
\centering Test error (\%) & $M=1$ & $M=20$ \\
\hline
$G_1$ \citep[unbiased]{bengio2013estimating} & 7.85 & 11.30 \\ 
$G_2$ \citep[biased]{bengio2013estimating} & 7.97 & 7.86 \\ 
$G_3$ (proposed biased) & 1.82 & 1.63 \\ 
$G_4$ (Tang \textit{et al.}, \citeyear{tang2013learning}) & na & 3.99 \\ 
$G_5$ (proposed unbiased) & na & 2.72 \\ \hline
deterministic (A) & \bf{1.51} & na \\
deterministic as stochastic (B) & 1.80 & na \\
hybrid (C) & na & 2.19 \\ 
deterministic, binary noise (D) & 1.80 & 1.92 \\ \hline
\end{tabular}

\label{tab:mnist_classification}
\end{center}
\end{table}

\end{document}